  \providecommand\BibTeX{{%
    \normalfont B\kern-0.5em{\scshape i\kern-0.25em b}\kern-0.8em\TeX}}}
\newcommand{\mc}{\mathcal}
\newcommand{\R}{\mathbb{R}}
\newcommand{\ol}{\overline}
\DeclareMathOperator*{\argmin}{argmin}
\newenvironment{proofsketch}{%
    \proof}{\endproof}
\begin{document}
\fancyhead{}
\acmYear{2020}  
\copyrightyear{2020}  
\acmPrice{}  

\title{
    Learning to Optimize Autonomy in Competence-Aware Systems
}

\author{Connor Basich}
\affiliation{%
 \institution{University of Massachusetts Amherst}
 \city{Amherst} 
 \state{Massachusetts}
}

\author{Justin Svegliato}
\affiliation{%
 \institution{University of Massachusetts Amherst}
 \city{Amherst} 
 \state{Massachusetts}
}

\author{Kyle Hollins Wray}
\affiliation{%
 \institution{Alliance Innovation Lab Silicon Valley}
 \city{Santa Clara}
 \state{California}
}

\author{Stefan Witwicki}
\affiliation{%
 \institution{Alliance Innovation Lab Silicon Valley}
 \city{Santa Clara}
 \state{California}
}

\author{Joydeep Biswas}
\affiliation{%
 \institution{The University of Texas at Austin}
 \city{Austin}
 \state{Texas}
}

\author{Shlomo Zilberstein}
\affiliation{%
 \institution{University of Massachusetts Amherst}
 \city{Amherst} 
 \state{Massachusetts}
}

\renewcommand{\shortauthors}{C. Basich et al.}

\begin{abstract}
Interest in \emph{semi-autonomous systems} (SAS) is growing rapidly as a paradigm to deploy autonomous systems in domains that require occasional reliance on humans.  This paradigm allows service robots or autonomous vehicles to operate at varying levels of autonomy and offer safety in situations that require human judgment. We propose an introspective model of autonomy that is learned and updated online through experience and dictates the extent to which the agent can act autonomously in any given situation. We define a \emph{competence-aware system} (CAS) that explicitly models its own proficiency at different levels of autonomy and the available human feedback. A CAS learns to adjust its level of autonomy based on experience to maximize overall efficiency, factoring in the cost of human assistance.  We analyze the convergence properties of CAS and provide experimental results for robot delivery and autonomous driving domains that demonstrate the benefits of the approach.
\end{abstract}

\keywords{\small
Markov decision processes; probabilistic planning; human-agent systems; adjustable autonomy; competence-aware systems}

\maketitle

\section{Introduction}

Recent progress in artificial intelligence and robotics has enabled the deployment of increasingly autonomous systems in complex unstructured domains such as
space exploration~\cite{Gao17,Mustard13}, autonomous underwater vehicles~\cite{Cashmore14,Kunz09,Sousa12},
service robots~\cite{Hawes17,Meeussen111}, and, most notably, autonomous vehicles~\cite{Broggi99,Broggi12,Dickmanns07}.
Common to these settings is that the autonomous systems are required to operate without supervision over extended periods of time while they rely on approximate models of the environment that may not be sufficient for handling every situation~\cite{Saisubramanian19}. Consequently, autonomous systems need to rely on various forms of human supervision and assistance~\cite{Coradeschi06,Zilberstein15}.

The vast majority of autonomous systems under development are in fact \emph{semi-autonomous systems} (SAS) that can operate autonomously under certain conditions, but may require human intervention or aid in order to achieve their assigned goals~\cite{Zilberstein15}.
For example, a space exploration rover may suspend operation and wait for a new plan from the command center when its wheel encounters unexpected resistance.
An autonomous car may request that the driver take over when lane demarcation is lost.  The reliance on humans in these situations is indicative of the limited competence of the autonomous system.  Human response could come in different forms that correspond to different limitations of the autonomous system.  For example, allowing a system to operate autonomously \emph{under human supervision} indicates a higher level of competence relative to a system that must first present its plan and get approval for every action before the action is executed.  

We propose to represent the competence of a semi-autonomous system using different levels of autonomy, each associated with distinct forms of human involvement.  Intuitively, we expect higher competence to imply that the necessary human involvement requires less effort or is less costly.  The resulting competence-aware system (CAS) can operate in multiple levels of autonomy, each of which is associated with different restrictions on autonomous operation and different forms of human assistance that compensate for the restricted abilities of the system. We further associate with each type of human assistance a unique set of feedback signals, the likelihood of which can be learned over time by the system. 

Determining the exact competence of an autonomous system at design time is very difficult, particularly when the environment is not fully specified.  For example, an autonomous vehicle may be initially authorized to operate in a fully autonomous mode on highways only during daytime and clear weather.
Hence, an initial level of competence could be determined during testing and evaluation, but adjustments must be made when the system is deployed.  Even when developers aim to err on the side of caution and define a lower level of autonomy as the default, it is also possible to unintentionally infer from initial testing that the system is more competent than it really is~\cite{Svegliato19, Rabiee2019}. Therefore, developing mechanisms to explicitly represent, reason about, and adjust the level of autonomy is an important challenge in artificial intelligence.  

Our objective in this paper is three-pronged: (1) to develop a formal \emph{representation of competence} using distinct levels of autonomy and distinct forms of human assistance, (2) to learn to \emph{optimize autonomy} based on human assistive actions, and (3) to develop a \emph{competence-aware} planning framework that factors in the system's knowledge about its own competence in order to reduce unnecessary reliance on humans.  Intuitively, the highest level of autonomy that a system can handle without human overrides captures its ``true competence.''  Our goal is to introduce learning mechanisms that allow the system to converge on its true competence and thereby minimize its reliance on human assistance.  We present a formal model and algorithms for implementing a CAS, a theoretical analysis of the model, and experimental results demonstrating the effectiveness of our approach in simulation as well as integration with an autonomous vehicle (AV) prototype.

\section{Related Work}

There has been substantial work on planning for semi-autonomous systems over the last two decades, particularly on various forms of \emph{adjustable autonomy}.  Adjustable autonomy refers to the ability of an autonomous system to alter its level of autonomy during plan execution, often by dynamically imposing or relaxing constraints on the extent of actions it can perform autonomously in a human-agent team~\cite{Dorais99,Bradshaw05,Mostafa19,Moffitt06,SZ:MZBJecai10,Vecht09,Scerri02,Zieba10}.
Our work adds two important capabilities to systems with adjustable autonomy: (1) explicitly modeling multiple forms of human feedback and the ability to learn about the agent's competence from feedback, and (2) learning predictive models of human feedback to allow the agent to converge on its level-optimal autonomy over time.

For autonomous vehicles in particular, SAE International has developed five distinct levels of autonomy that have become an industry standard~\cite{sae2014taxonomy}.  Our AV example uses levels of autonomy inspired by that standards and our experience with an AV prototype.  

The problem of safely transferring control from automation to a human in semi-autonomous systems~\cite{Zilberstein15} has been studied, particularly in the context of autonomous vehicles~\cite{Wray16}.  While such transfer mechanisms are highly relevant to our work, for the sake of clarity we do not model these transitions explicitly in this paper and instead focus on competence modeling and identifying the best level of autonomy in each situation.

Our work relates to broader research on \emph{long-term autonomy}~\cite{SZ:WSSWZlta18}, particularly \emph{symbiotic autonomy}~\cite{Coradeschi06,veloso2012cobots,veloso2015cobots} and \emph{human-in-the-loop AI} systems~\cite{zanzotto2019human}.
Symbiotic autonomy allows collaborative robots (CoBots) to proactively seek external help to overcome their limitations.  This work takes symbiotic autonomy to the next level, allowing agents to learn a model of their limitations from human feedback and consider the most cost-effective form of human assistance for each situation based on the acquired models.
Similarly, the issue of \emph{authority sharing} has been studied in human-robot systems~\cite{Mercier2010}, however in our case we assume that the human is always the authority, never the autonomous system.

\section{Problem Formulation}

We start with a description of the general problem.  Consider an autonomous agent that can operate in and plan for multiple levels of autonomous operation, each of which consists of different forms of human feedback.
In particular, this paper focuses on agents that use the following three models: a \textit{domain model} (DM) that describes the environment that the agent is operating in, an \textit{autonomy model} (AM) that describes the levels of autonomy the agent can operate in, where it is allowed to do so, and what the respective utilities are, and a \textit{human feedback model} (HM) that describes the types of feedback that the agent can receive from the human, how costly each type of feedback is, and how likely the agent is to receive it.
Figure~\ref{fig:competence_aware_system} represents an overview of a competence-aware system with specific levels of autonomy and feedback signals that we use throughout the rest of the paper as a running example.

\subsection{Domain Model}

The \textit{domain model} (DM) describes the environment in which the agent operates, most notably the transition and cost dynamics of the environment with respect to the agent. In this paper, we model this as a Stochastic Shortest Path (SSP) problem, a formal decision-making model for reasoning in stochastic environments where the objective is to find the least-cost path from a start state to a goal state. An SSP is a tuple $\langle S, A, T, C, s_0, s_g \rangle$, where $S$ is a finite set of states, $A$ is a finite set of actions, $T : S \times A \times S \rightarrow [0,1]$ represents the probability of reaching state $s' \in S$ after performing action $a \in A$ in state $s \in S$, $C : S \times A \rightarrow \mathbb{R}^+$ represents the expected immediate cost of performing action $a \in A$ in state $s \in S$, $s_0$ is an initial state, and $s_g$ is a goal state such that $\forall a \in A, T(s_g, a, s_g) = 1 \land C(s_g, a) = 0$.

A solution to an SSP is a policy $\pi : S \rightarrow A$ that indicates that action $\pi(s) \in A$ should be taken in state $s \in S$. A policy $\pi$ induces the value function $V^{\pi} : S \rightarrow \mathbb{R}$ that represents the expected cumulative cost $V^{\pi}(s)$ of reaching $s_g$ from state $s$ following the policy $\pi$. An optimal policy $\pi^*$ minimizes the expected cumulative cost $V^*(s_0)$ from the initial state $s_0$.

\subsection{Autonomy Model}

The \textit{autonomy model} (AM) captures the extent of autonomous operation that the agent can perform, i.e., both the actual different forms of autonomous operation as well as when each is allowed by some external constraint.
We formally represent this by the tuple $\langle \mc{L}, \kappa, \mu \rangle$, where $\mc{L} = \{ l_0, ..., l_n \}$ is the set of levels of autonomy where each level $l_i$ corresponds to some set of constraints on the system's autonomous operation. 
The constraints are reflected by the form of human involvement; for example, in supervised autonomy a person must monitor the system and override any actions deemed unsafe or undesirable.  We do not restrict the number of levels and forms of human involvement allowed in an autonomy model. 
Intuitively, the higher the level of autonomy, the lower the cost of human involvement, although that is not a requirement of the autonomy model.
 
Without loss of generality, we assume that $\mc{L}$ is a fully ordered set, although in general the theory extends to any graph in which two levels are connected when the level of autonomy could change from one to the other.
In Figure~\ref{fig:competence_aware_system}, these levels correspond to \textit{no autonomy}, where the agent needs a human to perform the action manually, \textit{verified autonomy}, where the agent must query for and receive explicit approval before even attempting the action, \textit{supervised autonomy}, where the agent can perform the action autonomously as long as there is a human supervising the agent who can intervene if something is going wrong, and \textit{unsupervised autonomy}, that represents fully autonomous operation. 

Next, $\kappa : S \times A \rightarrow \mc{P}(\mc{L})$ is the \textit{autonomy profile} mapping states $s \in S$ and actions $a \in A$ to a subset of $\mc{L}$ (note that $\mc{P}(\mc{L})$ denotes the powerset of $\mc{L}$), prescribing constraints on the allowed levels of autonomy for any situation.
These can be hard constraints on the system (i.e. technical, legal, or ethical) or can be temporary conservative constraints that can be updated over time as the system improves.
In Figure~\ref{fig:competence_aware_system}, $\kappa$ constrains the space of all policies $\Pi$ so that the system is only allowed to follow a policy that never violates $\kappa$. Finally, $\mu : S \times \mc{L} \times A \times \mc{L} \rightarrow \R$ represents the \emph{cost of autonomy} of performing action $a \in A$ at level $l' \in \mc{L}$ given that the agent is in state $s \in S$ and just operated in level $l \in \mc{L}$ in the previous state.

\subsection{Human Feedback Model}

The \textit{human feedback model} (HM) describes the agent's knowledge about, and predictions of, its interactions with the human.
We formally represent this as the tuple $\langle \Sigma, \lambda, \rho, \tau \rangle$, where $\Sigma = \{ \sigma_0, ..., \sigma_n \}$ is the set of possible feedback signals the agent can receive from the human, $\lambda : S \times \mc{L} \times A \times \mc{L} \rightarrow \Delta^{|\Sigma|}$ is the \textit{feedback profile} that represents the probability of receiving signal $\sigma$ when performing action $a \in A$ at level $l' \in \mc{L}$ given that the agent is in state $s \in S$ and just operated in level $l \in \mc{L}$, $\rho : S \times \mc{L} \times A \times \mc{L} \rightarrow \R^+$ is the \textit{human cost function} and represents the positive cost to the human of performing action $a \in A$ at level $l' \in \mc{L}$ given that the agent is in state $s \in S$ and just operated in level $l \in \mc{L}$, and $\tau: S \times A \rightarrow \Delta^{|S|}$ is the \textit{human state transition function} that represents the probability of the human taking the agent to state $s' \in S$ when the agent attempted to perform action $a \in A$ in state $s \in S$ but the human took over control. 

In practice, the feedback profile $\lambda$ and the human state transition function $\tau$ are assumed to be unknown a priori, so the agent must estimate them based on previous data it has gathered in the same or similar situations.
In Figure~\ref{fig:competence_aware_system}, after the action execution stage, the system will record the feedback it receives from the human, if any, and use that to update these model components. In practice, the feedback signals may also not be instantaneous, and in some cases could require a complex process of transferring control to and from a human over the course of an indefinite amount of time, where elements of the transfer process such as the communication interface are individually important. The problem of transfer of control in semi-autonomous systems has been separately studied~\cite{Wray16}, however for the sake of clarity we do not model this process explicitly in this work as we are focusing on the orthogonal problem of levels of autonomy and competence modeling.

\begin{figure}
    \centering
    \includegraphics[width=\linewidth]{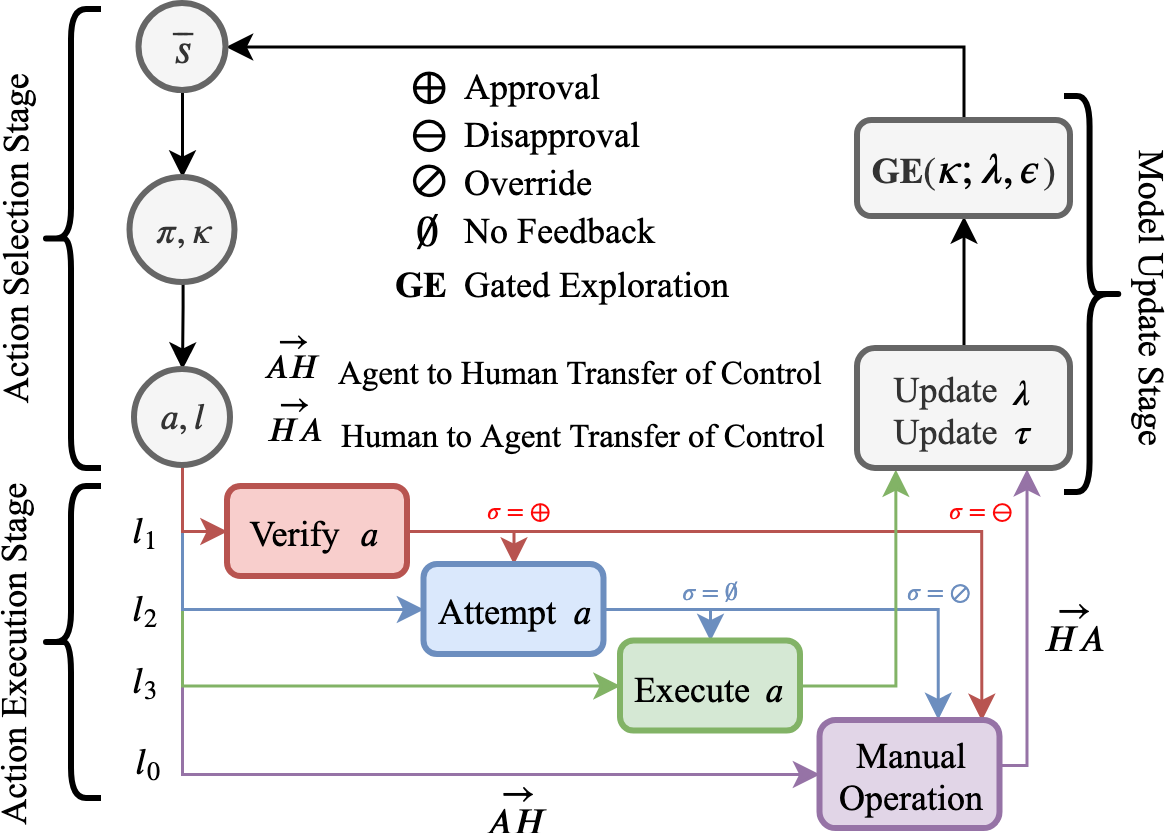}
    \caption{A competence-aware system with four levels of autonomy---verified, supervised, unsupervised, and no autonomy---and four type of feedback signals---approval, disapproval, override, and no feedback.}
    \label{fig:competence_aware_system}
\end{figure}

\section{Competence-Aware Systems}

Given the three model components above, we can now define a competence-aware system (CAS) that combines the three models into one formal decision-making framework.
A CAS therefore represents a planning problem that accounts for the different levels of autonomy available to the agent and factors the agent's expectations regarding the likelihood and cost of human intervention.  The objective of a solution to a CAS planning problem is to create a cost-effective plan that minimizes the cost of reaching the goal, including the cost of human assistance.  
Hence, the CAS uses the autonomy model to proactively generate plans that operate across multiple levels of autonomy, by leveraging the human feedback model to predict the likelihood of different feedback signals to optimize the level of autonomy and minimize the reliance on humans. 

\subsection{Model Definition}
CAS combines the domain model, autonomy model, and human feedback model into one sequential decision-making framework where the objective is to generate a policy that minimizes the expected cost of accomplishing a task. Hence we formally represent CAS as the following extension of an SSP:

\begin{definition}
    A \textbf{competence-aware system} $\mc{S}$ is represented by the tuple
    $\langle \ol{S}, \ol{A}, \ol{T}, \ol{C}, \ol{s}_0, \ol{s}_g \rangle$, where:
    \begin{itemize}
        \item $\ol{S} = S \times \mc{L}$ is a set of factored states such that $S$ is the set of domain states and $\mc{L}$ is the levels of autonomy.
        \item $\ol{A} = A \times \mc{L}$ is a set of factored actions such that $A$ is the set of domain actions and $\mc{L}$ is the levels of autonomy.
        \item $\ol{T} : \ol{S} \times \ol{A} \rightarrow \Delta^{|\ol{S}|}$ is a transition function comprised of a state transition function $T_l : S \times A \rightarrow \Delta^{|S|}$ for each level $l \in \mc{L}$.
        \item $\ol{C} : \ol{S} \times \ol{A} \rightarrow \R^+$ is a positive cost function comprised of $C: S \times A \rightarrow \R^+$, $\mu : \ol{S} \times \ol{A} \rightarrow \R$, and $\rho : \ol{S} \times \ol{A} \rightarrow \R^+$.
        \item $\ol{s}_0 \in \ol{S}$ is the initial state such that $\ol{s}_0 = \langle s_0, l \rangle$ for some $l \in \mc{L}$.
        \item $\ol{s}_g \in \ol{S}$ is the goal state such that $\ol{s}_g = \langle s_g, l \rangle$ for some $l \in \mc{L}$.
    \end{itemize}
\end{definition}

A solution to a given CAS is a policy $\pi$ that maps states and levels $\ol{s} \in \ol{S}$ to actions and levels $\ol{a} \in \ol{A}$, where the space of policies that the agent can consider is restricted by the autonomy profile $\kappa$ in the following way. 

Let $\ol{a} = \langle a, l \rangle$. Given $\ol{s} = \langle s, l' \rangle\in \ol{S}$, we say that $(\ol{s}, \ol{a})$ is \textbf{allowed} if $l \in \kappa(s, a)$, and a policy $\pi$ is allowed if for every $\ol{s} \in \ol{S}$, $(\ol{s}, \pi(\ol{s}))$ is allowed. We denote the set of policies $\pi \in \Pi$ that are allowed given $\kappa$ as $\Pi_\kappa \subseteq \Pi$ and require that any policy returned by solving the CAS, $\pi^*$, is always taken from $\argmin_{\pi \in \Pi_\kappa}V^\pi(s_0)$.

\subsection{Sample CAS}

As noted earlier, we focus on the CAS illustrated in Figure~\ref{fig:competence_aware_system}, which represents a class of CAS with four distinct levels of autonomy and four feedback signals.
Recent work on autonomous vehicles~\cite{Wray17} and autonomous mobile robots~\cite{Svegliato19} suggests that this class of CAS represents a wide range of autonomous systems. In Figure~\ref{fig:competence_aware_system}, the policy $\pi$ produces an action $a$ and a level $l$ for every state $\ol{s}$. $l$ dictates the manner in which the system carries out the action $a$, and the autonomy profile $\kappa$ restricts the levels the $\pi$ can return.

Formally, let $\mc{L} = \lbrace l_0, l_1, l_2, l_3 \rbrace$ where
\begin{itemize}
    \item $l_0$ is \emph{no autonomy}, which requires direct human aid in the form of manual control.
    \item $l_1$ is \emph{verified autonomy}, which requires the agent to query for and receive human approval prior to executing the action.
    \item $l_2$ is \emph{supervised autonomy}, which requires a human to be present and available to intervene in the case of failure.
    \item $l_3$ is \emph{unsupervised autonomy}, which involves no human in the loop at all.
\end{itemize}

\noindent Let $\Sigma = \lbrace \emptyset, \oplus, \ominus, \oslash \rbrace$, corresponding to \emph{no feedback}, \emph{approval}, \emph{disapproval}, and \emph{override} respectively. Furthermore, we assume that $\oplus$ and $\ominus$ can only be received in $l_1$, and $\oslash$ and $\emptyset$ only in $l_2$.

We can now specify the \textbf{state transition function} of this CAS. Given 
$\ol{s}, \ol{a},$ and $\ol{s}'$, we define $\ol{T}$ as follows:
\setlength{\belowdisplayskip}{1pt} \setlength{\belowdisplayshortskip}{1pt}
\setlength{\abovedisplayskip}{1pt} \setlength{\abovedisplayshortskip}{1pt}
\begin{equation}
        \ol{T}(\ol{s}, \ol{a}, \ol{s}') = 
        \begin{cases}
            \tau(s, a, s'), & \text{if $l = l_0$},\\
            \lambda( \oplus ) T(s, a, s')+ \lambda( \ominus ) [s = s'], & \text{if $l = l_1$},\\
            \lambda( \emptyset ) T(s, a, s')+ \lambda( \oslash ) \tau(s, a, s'), & \text{if $l = l_2$},\\
            T(s, a, s'), & \text{if $l = l_3$},
        \end{cases}
        \label{eq: T-bar}
\end{equation}
where $\lambda(\cdot) = \lambda(\cdot | \ol{s}, \ol{a})$ and $[\cdot]$ denotes Iverson brackets.

We further define $\ol{C}(\ol{s}, \ol{a})$ as follows:
\setlength{\belowdisplayskip}{1pt} \setlength{\belowdisplayshortskip}{1pt}
\setlength{\abovedisplayskip}{1pt} \setlength{\abovedisplayshortskip}{1pt}
\begin{equation}
    \ol{C}(\ol{s}, \ol{a}) = g\big(C(s, a), \mu(\ol{s}, \ol{a}), \rho(\ol{s}, \ol{a})\big).
\end{equation}
where $g$ is any cost aggregation function on $C, \mu$, and $\rho$, the simplest case of which is a weighted summation of the three values.

Intuitively, Eqn.~\ref{eq: T-bar} states that when the agent operates in $l_0$, it follows the transition dynamics of the human who takes control. When operating in $l_1$, the probability it arrives in state $s'$ is the probability it is approved to take the action times the probability that it succeeds following $T$ plus the probability that it is disapproved and the state is the same. In $l_2$, the probability it arrives in state $s'$ is the probability it succeeds following $T$ without any human intervention plus the probability that the human overrides it and takes it to that state. When the agent operates in $l_3$, it follows the transition dynamics of the domain model DM.

\subsection{Gated Exploration}
A fundamental component of the CAS model is the ability to adjust its autonomy profile over time using what the system has learned, and to optimize its autonomy by reducing unnecessary reliance on human assistance. However, before operating in a new level of autonomy, the system may have no knowledge of how the human will interact with it in that level, i.e., the feedback profile in that new level may be initialized by default to some baseline distribution. As a result, it is necessary that the system \emph{explore} levels of autonomy that it has reason to believe may be more cost effective than its current level, so that it may generate the data it needs to improve the accuracy and confidence of its feedback profile in those levels. 

However, allowing a system to alter its own autonomy profile
can lead to severe consequences in the real world if not done carefully. Therefore, we propose an extension to traditional exploration methods used in reinforcement learning called \emph{gated exploration}, in which the system must obtain permission from a human before exploring a new (disallowed) level of autonomy.  Hence, the system must first query the human to update the autonomy profile to allow such exploration. This way, the exploration of disallowed levels is \emph{gated} by a human authority to prevent the agent from randomly executing dangerous actions. 

Although many exploration-exploitation strategies may work in our context, we use a variant of $\epsilon$-greedy where $\epsilon$ is not fixed but instead proportional to the relative expected cost of performing a given action in each level of autonomy. More formally, the probability of exploring a level $l'$ adjacent to the current level $l$ in $\mc{L}$ is proportional to the softmax of the negative $q$-value of operating in level $l'$ over all levels adjacent to $l$. 

\subsection{Autonomy Profile Initialization}

Because we restrict the system to choose policies from $\Pi_\kappa$, if the autonomy profile $\kappa$ is altered, so too is the space of allowed policies; in particular this means that the optimal policy is, intuitively, only as good as $\kappa$. Hence there is a trade-off when setting the initial constraints on the allowed autonomy of the system, i.e., $\kappa$.

One can take a conservative approach and constrain the system significantly, for instance setting $|\kappa(s,a)| = 1$
so that a single level is deterministically selected 
for every $(s,a) \in S \times A$, reducing the problem complexity to solving the underlying domain model. However, doing so risks a globally suboptimal policy with respect to $\mc{L}$ and may, depending on the initial $\kappa$, make reaching the globally optimal policy impossible. On the other extreme, one can take a risky approach and not constrain the system at all a priori, leaving the decision of choosing the level of autonomy completely up to the system. This approach, while necessarily containing the optimal policy (subject to the agent's model) is naturally slower due to the larger policy space and inherently less safe as the agent can take actions in undesirable levels.

We propose that in practice, the ideal initialization is somewhere in the middle; $\kappa$ should be less constraining in situations where the expected cost of failure is relatively low, and more constraining in situations where it is high. For instance, in an autonomous vehicle, $\kappa$ should be more constraining initially in situations involving pedestrians, poor visibility, or chaotic environments such as large intersections with multiple vehicles; however, initial testing may indicate that driving along a highway is low-risk and may not require a highly constraining $\kappa$.

\section{Theoretical Analysis}
\label{sec:theory}
In this section we first state several key properties of a CAS and prove the following claims:

First, we show that under standard convergence assumptions, the feedback profile $\lambda$ will converge to the human's true feedback distribution. Second, we show that \emph{if feasible}, the agent's policy $\pi$ will converge to its competence regardless of how $\lambda$ is initialized, given a reasonable initialization of $\kappa$.

\subsection{Properties of a CAS}
\label{sec:properties}
We will refer to the human authority henceforth by the notation $\mc{H}$, and we make the following assumptions about them. First, we assume that the human authority is \emph{$\epsilon$-consistent}, which means that given two identical queries, the probability that they respond differently given no new information is bounded by some $0 < \epsilon \ll 1$. Second, we begin by assuming that the human authority's feedback signals come from an underlying stationary distribution, $\lambda^\mc{H}$, which may not be known even by $\mc{H}$ a priori, and is hence Markovian.

We now define three central properties of a CAS.

\begin{definition}
\label{def:lambda_h}
    Let $\lambda^{\mc{H}}$ be the stationary distribution of feedback signals that the human authority follows.
\end{definition}

\begin{definition}
\label{def:kappa_h}
    Let $\kappa^{\mc{H}} : S \times A \rightarrow \mc{P}(\mc{L})$ be a mapping that maps state-action pairs to the set of levels of autonomy the human authority will allow the system to operate in with nonzero probability. 
\end{definition}
Intuitively, $\kappa^{\mc{H}}$ represents the human authority's belief of the \emph{agent's} underlying competence. Hence by definition, any level that is not contained in $\kappa^\mc{H}(s,a)$ will never be approved by the human.

\begin{definition}
\label{def:competence}
    Let $\mc{S}$ be a CAS. The \emph{competence of $\mc{S}$}, denoted $\chi_\mc{S}$, is a mapping from $\ol{S} \times A$ to the optimal (least-cost) level of autonomy given perfect knowledge of $\lambda^{\mc{H}}$. Formally:
    \begin{equation}
        \chi_\mc{S}(\ol{s}, a) = \argmin_{l \in L} q(\ol{s}, (a, l) ; \lambda^{\mc{H}}).
    \end{equation}
\end{definition}

Intuitively, the system's level of competence for executing action $a$ in state $\ol{s}$ is the most beneficial (cost effective) level of autonomy were it to know exactly the human feedback model. In general, we expect this to be equal to $sup(\kappa^{\mc{H}}(s,a))$, i.e., the highest level of autonomy \emph{allowed} by the human, although it need not be the case always.  In principle, the highest allowed level of autonomy could require more frequent human interventions that may render it less efficient overall relative to a lower level of autonomy.  That is why we define the optimal level of autonomy based on the comprehensive expected cost.

It is important to note that this definition of competence relies on $\lambda^\mc{H}$, and hence is a definition of competence on \emph{the overall human-agent system}, and is explicitly not a measure of the competence of the underlying agent's fundamental abilities. A corollary of this fact is that the CAS is only as competent as the human authority believes it to be; a human authority that has a poor understanding of the system's abilities could lead to the system having a lower competence than a human authority that knows perfectly the limitations and capabilities of the system.

We now define two final properties of a CAS.
\begin{definition}
    Let $\mc{S}$ be a CAS. $\mc{S}$ is $\lambda$\textbf{-stationary} if for every state $\ol{s} = (s, l) \in \ol{S}$, and every action $a \in A$, the expected value of sample information (EVSI) on the feedback signals $\Sigma$ for $(\ol{s},a)$ is less than $\epsilon$ for any $\epsilon$ greater than 0.
\end{definition}

Intuitively, $\mc{S}$ is $\lambda$-stationary if, in expectation, any new feedback drawn from the true distribution $\lambda_\mc{H}$ will not affect $\lambda$ enough to change the optimal level of autonomy for any $\ol{s} \in \ol{S}$ and $a \in A$. 

\begin{definition}
    Let $\mc{S}$ be a CAS. $\mc{S}$ is \textbf{level-optimal} if for every state $\ol{s} = (s,l) \in \ol{S}$, we have that $\pi^*(\ol{s}) = \ol{a} = (a, \chi_\mc{S}(s, a))$. Similarly, $\mc{S}$ is $\epsilon$-\textbf{level-optimal} if this holds for $1-\epsilon$ percent of states.
\end{definition}

\subsection{Theoretical Results}
\label{sec:theory_results}

In this section, we show that under assumptions 1 and 2 as stated in Section~\ref{sec:properties}, if $\chi_\mc{S}(\ol{s},a) \in \kappa^{\mc{H}}(s,a)$ and there is a path from some level of autonomy in the system's initial $\kappa(s,a)$ to $\chi_\mc{S}(\ol{s},a)$ for every $\ol{s} \in \ol{S}, a \in A$, then the system's autonomy profile $\kappa$ will converge to its competence regardless of the initialization of $\lambda$.

\begin{proposition}
    Let $\lambda_t$ be the feedback profile after $t$ pieces of feedback have been received.
    As  $t \rightarrow \infty$, if no $(\ol{s},a)$ is starved and $\lbrace \lambda_t \rbrace$ converges in distribution, the autonomy-cognizant system will converge to $\lambda$-stationarity.
    \label{prop: stationarity-lambda}
\end{proposition}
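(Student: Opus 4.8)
The plan is to reduce $\lambda$-stationarity to a single limiting statement about the per-pair expected value of sample information and then show that this quantity vanishes. By definition, $\mc{S}$ is $\lambda$-stationary exactly when, for every $(\ol{s}, a) \in \ol{S} \times A$, the EVSI of one additional feedback signal falls below any $\epsilon > 0$; since the EVSI is nonnegative and $\ol{S} \times A$ is finite, it suffices to show that $\mathrm{EVSI}_t(\ol{s}, a) \to 0$ as $t \to \infty$ for each fixed pair, after which a single threshold $T$ can be chosen uniformly over the finitely many pairs. I would therefore fix $(\ol{s}, a)$ and bound its EVSI in terms of (i) how much a single new sample can move the estimate $\lambda_t(\cdot \mid \ol{s}, a)$, and (ii) how much such a move can change the optimal-level decision $\argmin_{l} q(\ol{s}, (a,l); \lambda_t)$ together with its associated cost.

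First I would control the sample counts. Let $n_t(\ol{s}, a)$ denote the number of signals recorded for $(\ol{s}, a)$ after $t$ total signals. The hypothesis that no $(\ol{s}, a)$ is starved means precisely that every pair keeps receiving feedback, so $n_t(\ol{s}, a) \to \infty$. Because $\lambda_t(\cdot \mid \ol{s}, a)$ is maintained as the empirical frequency of signals from the stationary, Markovian human distribution (Assumption 2), the strong law of large numbers together with $\epsilon$-consistency (Assumption 1) gives $\lambda_t(\cdot \mid \ol{s}, a) \to \lambda^{\mc{H}}(\cdot \mid \ol{s}, a)$; this is consistent with the assumed convergence of $\{\lambda_t\}$ in distribution, whose limit must then be the point mass at $\lambda^{\mc{H}}$ (convergence in distribution to a constant being convergence in probability). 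The key quantitative consequence is that incorporating one more signal perturbs the estimate by $\|\lambda_{t+1} - \lambda_t\| = O(1/n_t(\ol{s}, a)) \to 0$, so the information content of a single additional observation shrinks to zero.

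Next I would transfer this vanishing perturbation through to the decision and its cost. The $q$-values are obtained from the Bellman equations of the CAS, in which $\ol{T}$ and $\ol{C}$ depend on $\lambda$ linearly (Eqn.~\ref{eq: T-bar}); standard continuity of the SSP value function in its transition and cost parameters then makes $q(\ol{s}, (a,l); \cdot)$ continuous, indeed locally Lipschitz, in $\lambda$. For a pair whose limiting $q$-values have a unique minimizer over $\mc{L}$, there is a strictly positive gap between the best and second-best level, so once the perturbation is smaller than this gap the optimal level is unchanged by any single new signal and the EVSI is zero; for a pair whose limiting $q$-values tie at the minimum, a new signal may flip the chosen level, but because the tied levels carry asymptotically equal cost the resulting change in expected cost—which is exactly what the EVSI measures—still vanishes. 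In both cases $\mathrm{EVSI}_t(\ol{s}, a) \to 0$, and taking the maximum over the finite set $\ol{S} \times A$ yields a single $t$ beyond which every EVSI lies below $\epsilon$, i.e.\ $\lambda$-stationarity.

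I expect the main obstacle to be the global coupling hidden in the third step: a new sample at one pair $(\ol{s}, a)$ changes $\lambda_t$ there, which perturbs $\ol{T}$ and $\ol{C}$ and hence the \emph{entire} value function, so the change in $q(\ol{s}, (a,l); \lambda_t)$ is not a purely local effect. Making the argument rigorous requires a Lipschitz bound on the CAS value function with respect to simultaneous perturbations of all model parameters—so that an $O(1/n_t)$ change in one component of $\lambda$ induces an $O(1/n_t)$ change in every $q$-value—and care that the tie-breaking analysis remains uniform as all per-pair estimates converge together. The finiteness of $\ol{S} \times A$ and the boundedness of the cost function are what keep these bounds uniform and allow the per-pair limits to combine into global $\lambda$-stationarity.
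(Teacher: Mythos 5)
Your proposal is correct in substance but takes a genuinely different route from the paper's proof. The paper fixes an arbitrary $(\ol{s},a)$, writes the EVSI in its Bayesian form as a difference of two integrals over the space $\Lambda$ of feedback profiles against a belief $p_t(\lambda)$, argues from convergence in distribution that $p_t$ tends to a Dirac delta with point mass at $\lambda^{\mc{H}}$, and then evaluates both terms at that point mass so they cancel via $\sum_{\sigma \in \Sigma} \lambda^{\mc{H}}(\sigma \mid \cdot) = 1$. You instead argue on the frequentist side: no starvation gives $n_t(\ol{s},a) \to \infty$, one additional signal perturbs the empirical estimate by $O(1/n_t(\ol{s},a))$, and Lipschitz continuity of the $q$-values in $\lambda$ combined with a gap/tie analysis of $\argmin_{l} q(\ol{s},(a,l);\lambda_t)$ shows the value of that extra signal vanishes. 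Each approach buys something. The paper's computation is shorter and works directly with the defining integral, but it silently (i) upgrades convergence in distribution to convergence in probability, which is valid only because the limit is the constant $\lambda^{\mc{H}}$ --- a point you justify explicitly via the strong law of large numbers under the no-starvation hypothesis --- and (ii) assumes continuity of $U(\cdot,l)$ in $\lambda$ in order to pass the Dirac limit through the integrals, which is exactly the global-coupling issue you isolate as the main obstacle (a perturbation of $\lambda$ at one pair moves the entire value function) and that the paper never addresses. Your decision-gap argument also sidesteps the paper's final algebraic chain, which pulls $\lambda^{\mc{H}}(\sigma \mid \cdot)$ through the $\max_{l}$ in a way that is not obviously justified when the signal probabilities depend on $l$. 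So while your version demands more machinery --- the uniform Lipschitz bound on the CAS value function over simultaneous parameter perturbations, plus the tie-breaking case --- it is the more careful argument at precisely the points where the paper's derivation is loose, and both reach the same conclusion that the per-pair EVSI tends to zero, with finiteness of $\ol{S} \times A$ giving the uniform statement.
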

\begin{proof}
    Let $\ol{s} \in \ol{S}$ and $a \in A$.
    As $\ol{s}$ and $a$ are arbitrary and we assume that no $(\ol{s},a)$ is starved, it is sufficient to show that $(\ol{s},a)$ converges to stationarity as $t \rightarrow \infty$.
    First, let $U(\lambda, l)$ be the q-value of $(\ol{s},a)$ under the optimal policy given that our feedback profile is $\lambda$ and we execute $a$ in level $l$.
    Then
\[ \text{EVSI} {=} \sum_{\sigma \in \Sigma} \max_{l \in L} \int_{\Lambda} \!U(\lambda,l)\lambda(\sigma|s,a,l)p(\lambda)d\lambda 
                    \,-\,  \max_{l \in L} \int_{\Lambda} \!U(\lambda,l)p(\lambda)d\lambda. \]

    Because $\lbrace \lambda_t \rbrace$ converges in distribution, $\lim_{t \rightarrow \infty} Pr(|\lambda_t - \lambda^\mc{H}| > \epsilon) = 0$ $\forall \epsilon>0$ where $\lambda^\mc{H}$ is the true distribution.
    Therefore, in the limit the probability that $\lambda = \lambda^\mc{H}$ after $t$ steps, $p_t(\lambda)$, defines a Dirac delta function with point mass centered at $\lambda^\mc{H}$.
    Hence we get that,
   \begin{align*} 
        \lim_{t \rightarrow \infty}&\text{EVSI} \\
         &= \big(\lim_{t \rightarrow \infty}\sum_{\sigma \in \Sigma}\max_{l \in L} \int_{\Lambda} U(\lambda,l)\lambda(\sigma|s,\emptyset,a,l)p_t(\lambda)d\lambda\big)\\
        &\hspace{6mm}- \big(\lim_{t \rightarrow \infty}\max_{l \in L} \int_{\Lambda} U(\lambda,l)p_t(\lambda)d\lambda\big)\\
        &= \big(\sum_{\sigma \in \sigma}\max_{l \in L}U(\lambda^\mc{H},l)\lambda^\mc{H}(\sigma|s,\emptyset,a,l)\big) - \big(\max_{l \in L} U(\lambda^\mc{H},l)\big)\\
        &= \sum_{\sigma \in \Sigma} \max_{l \in L} U(\lambda^\mc{H},l)(1 - \lambda^\mc{H}(\sigma|s,\emptyset,a,l))\\
        &= \max_{l \in L} U(\lambda^\mc{H},l)\big(1 - \sum_{\sigma \in \Sigma}\lambda^\mc{H}(\sigma|s,\emptyset,a,l)\big)\\
        &= \max_{l \in L} U(\lambda^\mc{H},l)(1-1)\\
        &= 0. \\[-22pt]
    \end{align*}
\end{proof}

\begin{theorem}
    Let $\mc{S}$ be a CAS that follows the gated exploration strategy for which there exists at least one path from $\kappa_0(s,a)$ to $\chi_\mc{S}(\ol{s}, a)$ in $\mc{L}$ where all levels along the path are in $\kappa^\mc{H}(s, a)$ for every $(\ol{s}, a) \in \ol{S} \times A$. Then given any initial $\lambda_0$, if no $(\ol{s}, a)$ is starved and $\{\lambda_t\}$ converges in distribution, then as $t \rightarrow \infty$, $\mc{S}$ will converge to level-optimality.
\label{thm: main}
\end{theorem}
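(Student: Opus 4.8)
The plan is to fix an arbitrary pair $(\ol{s}, a) \in \ol{S} \times A$ and decompose level-optimality into two claims: first, that the competence level $\chi_\mc{S}(\ol{s}, a)$ eventually enters the autonomy profile $\kappa(s,a)$ so that the planner is even permitted to use it; and second, that once it is permitted the optimal allowed policy actually prefers it. Since $(\ol{s},a)$ is arbitrary and $\ol{S} \times A$ is finite, establishing both claims pointwise yields $\pi^*(\ol{s}) = (a, \chi_\mc{S}(s,a))$ everywhere in the limit, which is exactly level-optimality.

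First I would show that gated exploration drives $\kappa(s,a)$ to contain $\chi_\mc{S}(\ol{s},a)$. By hypothesis there is a path $l_0, l_1, \ldots, l_k = \chi_\mc{S}(\ol{s},a)$ in $\mc{L}$ starting from some $l_0 \in \kappa_0(s,a)$ with every $l_i \in \kappa^\mc{H}(s,a)$. Under the softmax-based gated $\epsilon$-greedy rule, the probability of requesting exploration of a level adjacent to the current one is the softmax of negative $q$-values and is therefore strictly positive regardless of the (possibly inaccurate) current estimates; moreover, since each $l_i \in \kappa^\mc{H}(s,a)$, the human grants that request with nonzero probability by Definition~\ref{def:kappa_h}. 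The advance along the path is thus a recurrent walk with strictly positive transition probability at each step, so a Borel--Cantelli argument over infinitely many trials gives that the system traverses the whole path and adds $\chi_\mc{S}(\ol{s},a)$ to $\kappa(s,a)$ with probability one as $t \to \infty$.

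Next I would upgrade ``permitted'' to ``selected.'' Once $\chi_\mc{S}(\ol{s},a)$ and the intervening levels lie in $\kappa$ and no $(\ol{s},a)$ is starved, each relevant level is sampled infinitely often, so the assumed convergence in distribution of $\{\lambda_t\}$ concentrates $\lambda_t$ on $\lambda^\mc{H}$ exactly as in the proof of Proposition~\ref{prop: stationarity-lambda}; by that proposition the system attains $\lambda$-stationarity, so the estimated $q$-values converge to the true $q(\ol{s},(a,l);\lambda^\mc{H})$ and the optimal level no longer changes under further feedback. I would then invoke continuity of the optimal value function and the Bellman backups in the model parameters to pass from pointwise convergence of $\lambda_t$ to convergence of the full $q$-values, since the value at $(\ol{s},a)$ depends on $V^*$ over all states. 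Finally, because $\pi^* \in \argmin_{\pi \in \Pi_\kappa} V^\pi(s_0)$ and, for any fixed domain action $a$, the level minimizing the $q$-value is by definition exactly $\chi_\mc{S}(\ol{s},a)$, the optimal allowed policy executes its chosen action at that level, giving $\pi^*(\ol{s}) = (a, \chi_\mc{S}(s,a))$.

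The main obstacle I anticipate is the exploration step: rigorously arguing that the stochastic, estimate-driven gated exploration actually reaches $\chi_\mc{S}(\ol{s},a)$ and that the gains persist. The subtlety is that early, poorly-initialized estimates can make $\chi_\mc{S}$ look unattractive, so the walk up the path can stall; the argument must lean on softmax positivity (every adjacent level retains nonzero exploration probability) together with the $\kappa^\mc{H}$ guarantee that the human approves each step with nonzero probability, and then on recurrence over infinitely many trials. A secondary care point is decoupling the expansion of $\kappa$ from the convergence of $\lambda_t$: one must ensure the non-starvation hypothesis continues to apply to levels that only become reachable after $\kappa$ has expanded, so that their feedback profiles also converge to $\lambda^\mc{H}$.
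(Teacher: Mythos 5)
Your proposal is correct and follows essentially the same route as the paper's (sketch-level) proof: Proposition~\ref{prop: stationarity-lambda} gives convergence of $\lambda_t$ to $\lambda^\mc{H}$ so the true $q$-values identify $\chi_\mc{S}(\ol{s},a)$ as optimal, while gated exploration along the hypothesized path through $\kappa^\mc{H}(s,a)$ reaches that level with nonzero probability at every step, after which exploration terminates. Your additions --- the Borel--Cantelli argument formalizing ``nonzero probability infinitely often implies almost-sure traversal,'' the continuity of Bellman backups in $\lambda$, and the care about non-starvation of levels that only become reachable after $\kappa$ expands --- are rigor the paper's proof sketch leaves implicit, not a different approach.
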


\begin{proofsketch}
    By proposition \ref{prop: stationarity-lambda}, under the conditions as stated, $\lambda$ will converge to $\lambda^\mc{H}$. As a result, what is left is to show that in the limit, $\pi^*(\ol{s}) = (a, \kappa(s,a))$ for every $\ol{s} \in \ol{S}$. Because $\lambda$ has converged to $\lambda^{H}$, the system can determine the cost-optimal level of autonomy for every action $a$ in any state $\ol{s}$; this is exactly $\chi_\mc{S}(\ol{s},a)$. Hence, we must only show that the system will reach this level under the conditions stated. By the exploration policy, the system has a nonzero chance of exploring all neighboring levels at any given point in time, and since $\mc{L}$ is assumed to have a valid path from $\kappa_0(s,a)$ to $\chi\mc{S}(\ol{s},a)$, there is always a nonzero probability of reaching the optimal level via exploration. At this point, since $\lambda$ is converged, exploration will terminate. Hence, we are done.
\end{proofsketch}

\subsection{Model Assumptions}
\label{sec:assumptions}
We make two assumptions about the human authority, $\mc{H}$: (1) that the human provides consistent feedback, and (2), that the human's feedback comes from a stationary, Markovian distribution.  We discuss below practical considerations regarding these assumptions.

Regarding assumption (1), implicit in this assumption is that humans respond appropriately to each situation, possibly with some noise representing the likelihood of human error. 
However, because of the limited scope of the system's domain model, it could be that perfectly consistent feedback from $\mc{H}$'s perspective is \emph{perceived} to be random by the system, particularly when it is not aware of the domain features that explain the human feedback.  As an example, consider a robot that can open `push' doors, but cannot open `pull' doors. If the robot cannot discriminate between these types of doors, consistent and correct human feedback (approving autonomously opening `push' doors only) may be perceived by the robot to be arbitrary or random.  
Although in practice one may wish to avoid such situations, we emphasize that the system \emph{will still converge to its competence} -- possibly low competence -- when the feedback distribution appears to be random.

Regarding assumption (2) that the human feedback distribution $\lambda^\mc{H}$ is stationary and Markovian from the start, it implies that the human has good knowledge of the system from the start.  That may not be realistic.  It is more likely that the feedback signals may very based upon the observed performance of the system over time. However, as the human authority observes the system's performance, their feedback distribution will eventually reach a stationary point as long as the system's underlying capabilities stay fixed. Therefore, even if there are erroneous feedback signals provided early in this process, in the limit the system \emph{will still converge to its competence}.  Furthermore, two possible means of expediting this is to introduce a training phase at the beginning of the system's deployment to allow the human to observe the system's performance and develop accurate expectations regarding the system's capabilities, and to introduce standardized feedback criteria that is made known to the humans.

\begin{figure}[t!]
    \centering
    \includegraphics[width=0.8\columnwidth]{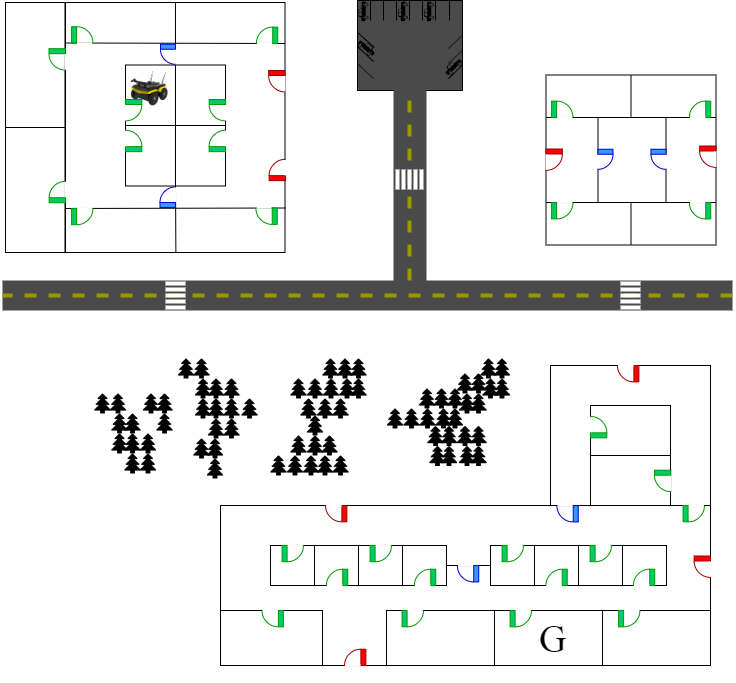}
    \caption{Campus delivery robot domain. The robot starts in a room on campus and must traverse both crosswalks and doors to reach its goal state.
In some cases, the robot may learn to go around a nearby parking lot rather than take the crosswalks due to poor visibility and learn to identify doors that can be opened autonomously.}
    \label{fig:campus_delivery_domain}
\end{figure}

\section{Experimental Results}
\label{sec: experiments}

We implemented our model in two different problem domains. The first domain features a campus delivery robot that must learn to properly navigate a large area featuring several kinds of obstacles. The second domain features an autonomous vehicle that must learn to properly pass an obstruction in its lane. We begin with a description of these domains as implemented in our experiments.

\emph{Campus Delivery Robot}  A delivery robot must navigate a campus to deliver packages from one location to another. An illustration of the map used in our experiments can be seen in Figure~\ref{fig:campus_delivery_domain}. In this domain there are two categories of potential obstacles for the robot. The first is a crosswalk that can have differing levels of traffic -- no traffic, light traffic, and heavy traffic. Although not pictured, certain crosswalks have better visibility than others; however, this is not modeled by the robot and so it must learn to discriminate using only human feedback. The second kind of obstacle is a door that the robot must get through to enter buildings, rooms, or even different areas of buildings once inside. Doors come in three colors -- blue, green, and red.  While the robot can see the colors of the doors, the meaning of these colors changes from building to building and is unknown to the robot a priori, so the system must once again rely on human feedback to discriminate between doors it is allowed to open and and those that is not, in each building on the campus. Trees, walls, and roads are all avoided by the system completely. $\kappa(s,a)$ is initialized to be $\{0,1\}$ for all actions in obstacle states. For states with no obstacles, the system is allowed to operate at level 3.

\begin{figure*}[t]
    \centering
    \begin{subfigure}[t]{0.33\textwidth}
        \centering
        \includegraphics[width=\textwidth]{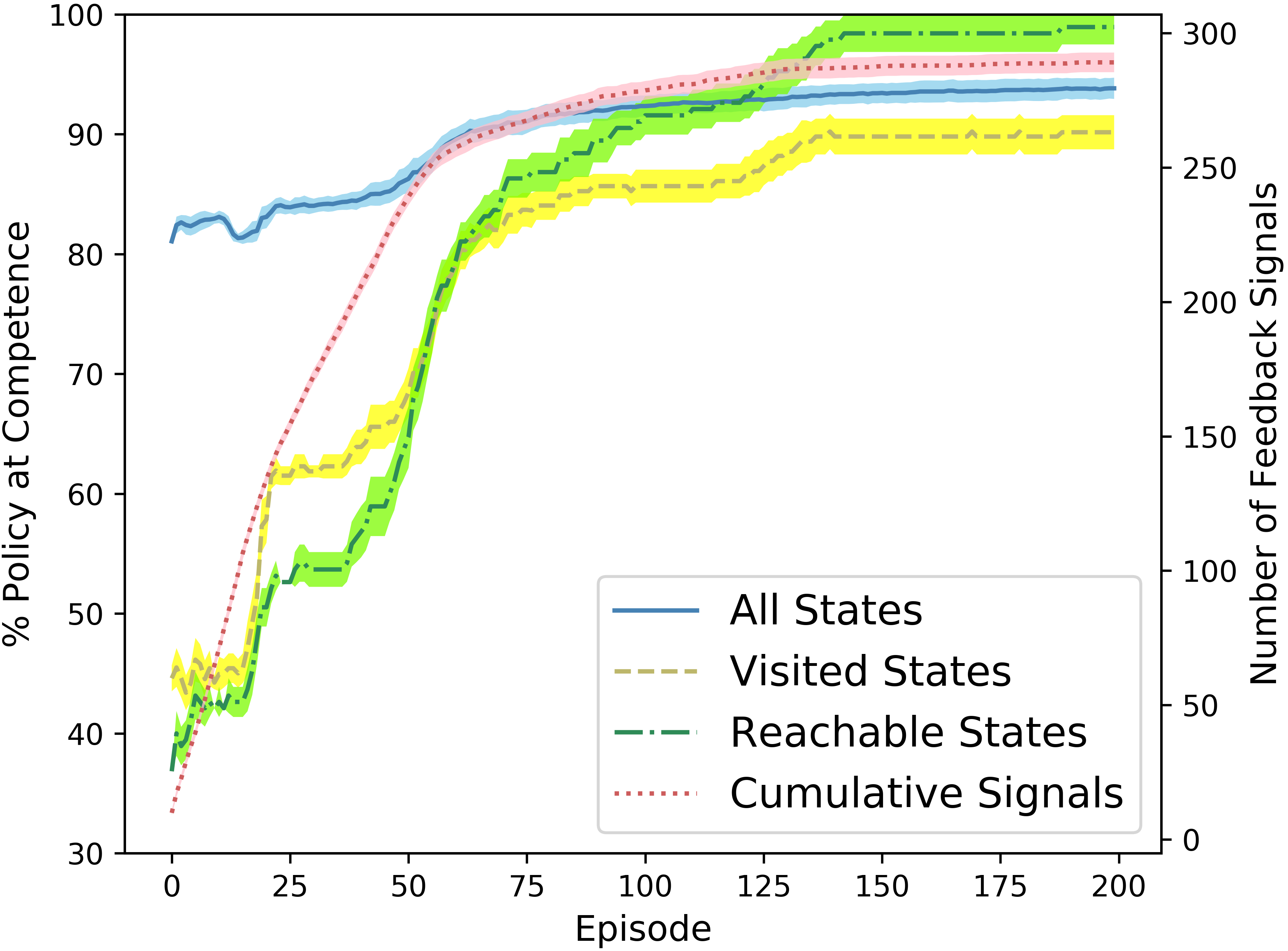}
        \caption{Campus delivery robot with a single task.}
        \label{fig:single_task_competence}
    \end{subfigure}
    \begin{subfigure}[t]{0.33\textwidth}
        \centering
        \includegraphics[width=\textwidth]{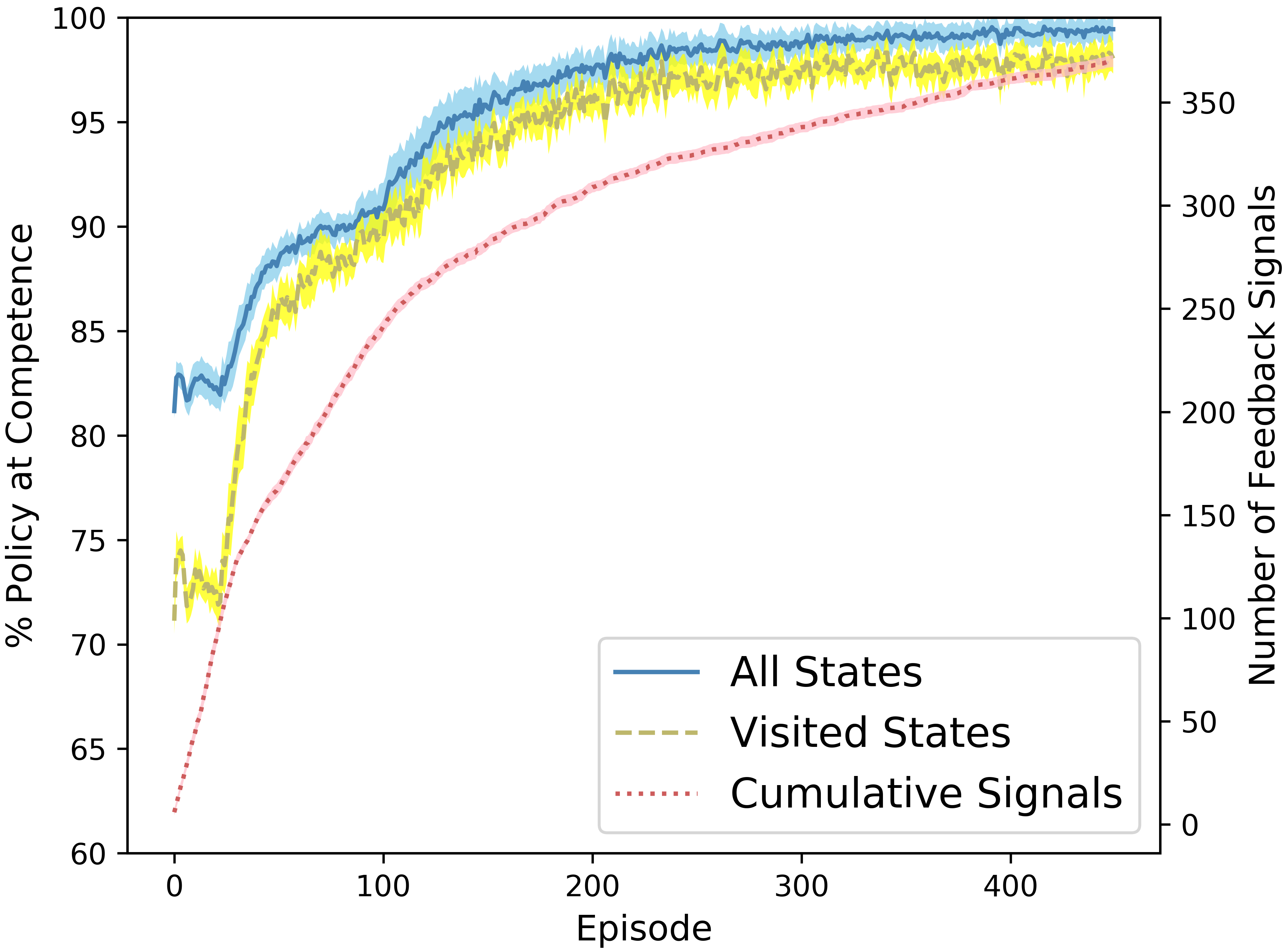}
        \caption{Campus delivery robot with random tasks.}
        \label{fig:random_task_competence}
    \end{subfigure}
    \begin{subfigure}[t]{0.33\textwidth}
        \centering
        \includegraphics[width=\textwidth]{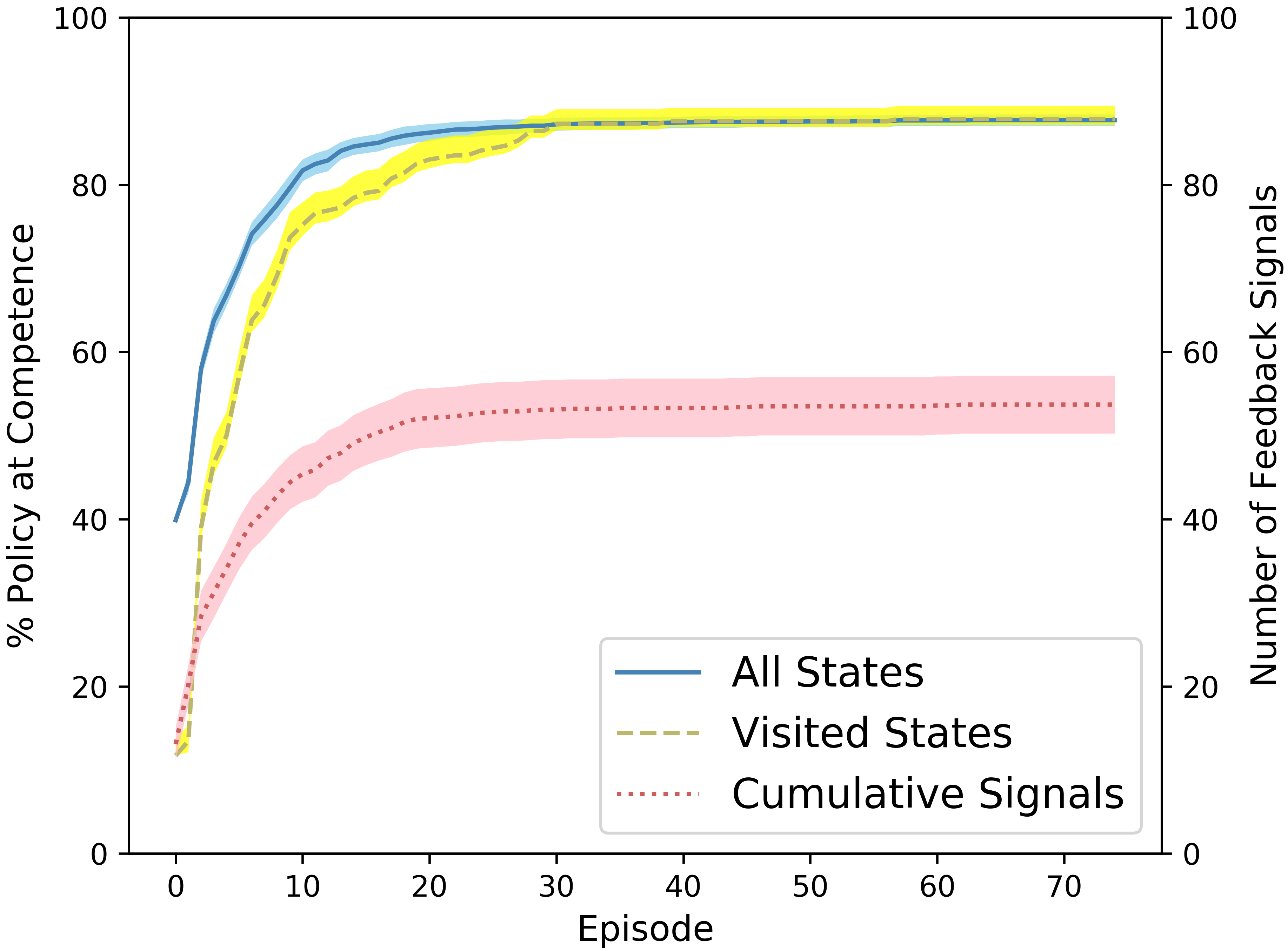}
        \caption{Autonomous vehicle obstacle passing.}
        \label{fig:obstacle_passing_competence}
    \end{subfigure}
    \vspace{-8pt}
    \caption{Level-optimality of the CAS across different subsets of the state space. \emph{All states} refers to the entire state space. \emph{Visited states} are states the system entered at least once during the entire experiment. \emph{Reachable states} are states reachable by the policy during that episode. \emph{Cumulative signals} are the total feedback signals received by the end of each episode. Results shown are the mean and standard error over 10 trials.}
    \label{fig: competence_commparison.png}
\end{figure*}

\emph{Autonomous Vehicle Obstacle Passing}  An autonomous vehicle (AV) driving on a single-lane road has its lane blocked by an obstacle that can be stopped (e.g. a parked car) or moving (e.g. a large slow-moving tractor) and is blocking the AV from progressing. To get around the obstacle, the AV must cross the center lines and drive into the oncoming traffic's lane; the AV must reason about oncoming vehicles that it may not be able to see until it edges into the oncoming lane, as well as vehicles behind it that may move up to occupy its space when the AV enters the oncoming lane, at which point the AV cannot reverse. If the agent moves into the oncoming lane with an oncoming vehicle that is too close, the oncoming vehicle does not have time to stop and will crash into the AV. However, the oncoming vehicle may also choose to stop and wait for the AV to take its turn and pass; while the AV can determine if the oncoming vehicle is stopped, it must learn that it is allowed to pass when the oncoming vehicle is waiting for it. Similarly, when the vehicle behind it takes its space behind the obstacle and it cannot reverse, the AV must learn to discriminate between situations that require it to transfer control to a human and situations in which it can operate autonomously. $\kappa(s,a)$ is initialized to be $\{0,1\}$ in states where it has not entered into the oncoming lane as it must get explicit permission to attempt to do so, and $\{0,2\}$ in states where it is in the oncoming lane, as it is no longer safe to stop and wait for a response, and must instead rely on the human to override and take control when the situation becomes too dangerous.

\subsection{Campus Delivery Robot}
In the campus delivery robot domain, we conducted two experiments. In the first, the start and goal states remained static throughout all episodes resulting in the starvation of some (unvisited) state-action pairs. In the second, the start and goal states are randomly drawn from a collection of rooms throughout the campus map each episode so that no state-action pair is starved.

Figure \ref{fig:single_task_expected_cost} plots the mean and standard error of the expected cost of reaching the goal state in the single task experiment across episodes averaged over 10 trials. We see that, after an initial spike in the early episodes, the expected cost steadily decreases towards a steady state as the agent both learns to more accurately predict human feedback and updates its autonomy profile towards its competence.

The steady state in the expected cost is consistent with the results shown in Figure~\ref{fig:single_task_competence} where we see that the agent converges to almost 100\% level-optimality 
by episode 150 for all reachable states.  That is, the agent converges to a level-optimal solution that exploits its competence and no longer visits most states.
Hence the level-optimality stagnates at $\sim$90\% across all states and all visited states at this time as well.

Table \ref{tab:percent_actions_single_task} reports the percent of actions taken at each level of autonomy over the course of the first 150 episodes. We can see that by episode 150 the agent goes from requesting approval 45.8\% of the time in the first episode to operating in only no autonomy and unsupervised autonomy, indicating that the system has learned to properly exploit where it does or does not need to rely on the human authority.  Note that to accomplish this specific task, it is not cost-effective to request human assistance  in Levels 1 and 2, hence the agent learns to avoid them.  However, this is not a general pattern we would expect in every domain, but more of a reflection of the agent's competence and cost of human assistance in  Levels 1 and 2 in this domain.

\begin{figure}[t!]
    \centering
    \includegraphics[width=0.66\columnwidth]{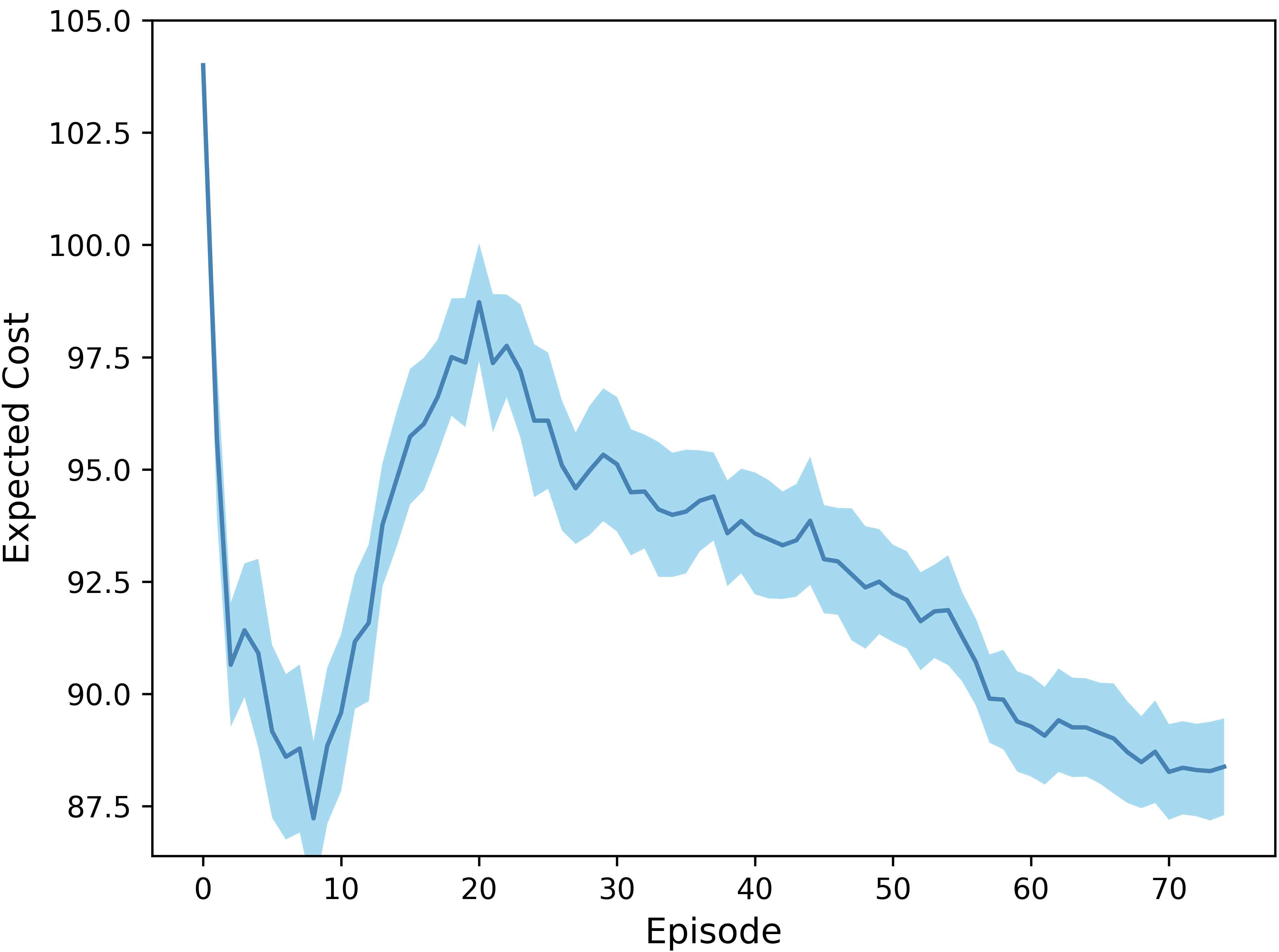}
    \vspace{-4pt}
    \caption{Campus delivery robot expected cost per episode.}
    \label{fig:single_task_expected_cost}
\end{figure}

\begin{table}[t!]
\center
\begin{tabular}{|rcccc|}
    \hline
    \textbf{Episode} & \textbf{Level 0} & \textbf{Level 1} & \textbf{Level 2} & \textbf{Level 3} \\ \hline \hline
    0       &   12.5  &   45.8  &   0.0   &   41.7  \\ \hline
    25      &   25.0  &   12.5  &   50.0  &   12.5  \\ \hline
    100     &   18.2  &   0.0   &   0.0   &   81.8  \\ \hline
    150     &   12.5  &   0.0   &   0.0   &   87.5  \\ \hline
\end{tabular}
\caption{Percent of actions taken at each level of autonomy in the single-task experiment.}
\vspace{-10pt}
\label{tab:percent_actions_single_task}
\end{table}

In the second experiment, the start and goal states are selected randomly in each episode, representing random tasks that the agent must complete throughout the entire campus. In this example, the agent covers a larger portion of the state space, but interacts less frequently with any given subset of states than it does in the first experiment. In such a setting, because no state-action pairs are starved in the limit, the agent will be guaranteed to converge to its competence in the limit across the entire environment; however because some states may be visited very infrequently, this convergence can take much longer than the 150 episodes it took to converge to 100\% in the first experiment across all reachable states. As a result, it is even more important that the agent generalizes what it has learned from prior experience and feedback.

Figure \ref{fig:random_task_competence} shows the results of this experiment. In particular we see that by episode 450, the system is operating at close to 100\% level-optimality across the entire state space. While this takes much longer than the 150 episodes it took to reach 100\% level-optimality across reachable states in the first experiment, we also see that by episode 150 in this experiment, the system is operating at $\sim$95\% level-optimality across the entire state space, almost five percent higher than in the first experiment, and using a similar number of feedback signals to reach this point. Because the system is able to optimize its reliance on the human authority across the entire environment, given a new task that it has not seen before, it would be able to perform close to optimally from the start.
 
Also shown in Figures \ref{fig:single_task_competence} and \ref{fig:random_task_competence} is the cumulative number of feedback signals the system has received by the end of each episode. A natural concern with a system that relies on feedback from a human is that the process of acquiring that feedback may be too onerous on the human to make it worthwhile. However, we can see in these figures that the system only requires $\sim$275 and $\sim$375 feedback signals respectively to reach near 100\% level-optimality, with most of the feedback coming from early episodes where the human may be expecting to be more actively involved. Furthermore, in our case all feedback comes as a result of assistive actions, and all assistive actions produce feedback; no feedback is given simply to provide information. This means that we are not introducing more work for the human supervisor to train the system, but simply utilizing existing information generated in the course of normal operation of the semi-autonomous system.

Finally, all results shown are the mean values over 10 trials along with standard error bars. We can see from the graphs that the standard error is consistently small across episodes indicating that there is small variance in the performance of the system, particularly with respect to the number of feedback signals the system needs to reach level-optimality in this domain.

\subsection{Autonomous Vehicle Obstacle Passing}
In the third experiment, an autonomous vehicle (AV) must navigate around an obstacle that is blocking its lane by moving into the oncoming lane, which may or may not have traffic. In doing so, the AV opens the space behind the obstacle to potentially be filled by a rear vehicle preventing it from moving back into the space if needed. It is worth noting that due to the highly safety-critical nature of this domain, we restrict the system's capacity to generalize both the feedback it receives and updates made to its autonomy profile. As an example, the AV being approved to edge into the oncoming lane when there is an oncoming vehicle far away has no impact on the likelihood of approval for edging when there is a vehicle some other distance away, or no observed vehicle at all. However, the system is allowed to generalize what it learns across certain features such as specific map location so it can apply what it learns to other single-lane locations.

\begin{figure}[t!]
    \centering
    \includegraphics[width=0.66\columnwidth]{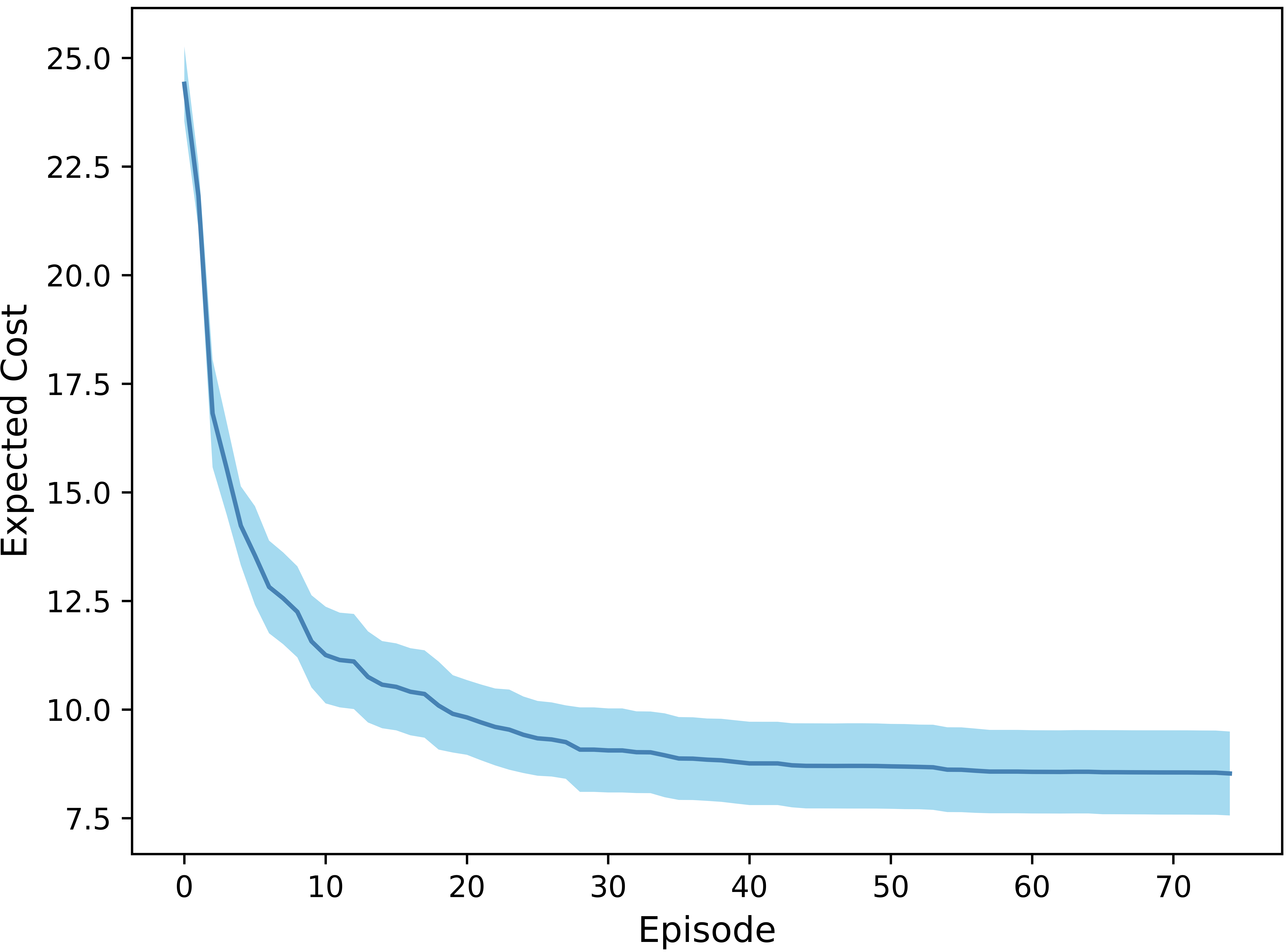}
    \caption{Obstacle passing domain expected cost per episode.}
    \label{fig:obstacle passing expected_cost}
\end{figure}

Despite this limitation, because the system interacts more consistently with a larger portion of the state space in each episode, the results we get are consistent with those in the campus delivery robot domain. As seen in Figure \ref{fig:obstacle passing expected_cost}, the expected cost decreases consistently with each episode as the AV learns to predict the human feedback better and its autonomy profile is updated towards its competence. In this problem specifically, it so happened that the system learned that the optimal policy for navigating the obstacle featured actions solely taken in unsupervised autonomy. Hence at the point where we see its level-optimality plateau in Figure \ref{fig:obstacle_passing_competence}, the system is no longer receiving information from the human as it is exploiting its autonomy model by only taking actions that it can do fully unsupervised to navigate the obstacle.

Similar to the campus delivery domain, the level-optimality plateaus at $\sim$90\% across both the entire state space, and states visited at least once throughout the experiment. This indicates that $\sim$10\% of states in the state space were either never visited, or visited too rarely for the agent to ever learn enough to converge to level-optimality in them. More importantly, the system is able to achieve almost 90\% level-optimality with fewer than 60 total feedback signals, most of which are obtained within the first 20 episodes. This is desirable when dealing with a safety critical problem domain that may be encountered infrequently in practice.

\section{Conclusion}
We present a new formal model -- competence-aware system -- for decision making in semi-autonomous systems where an autonomous agent can operate at different levels of autonomy, and must optimize its autonomous operation by learning from direct feedback signals provided by a human authority.
We demonstrated empirically that the approach enables the system to quickly learn to operate at its underlying competence in almost all situations, effectively optimizing its autonomous operation so as to minimize unnecessary reliance on human intervention. We validated the benefits of CAS across two simulated domains -- a campus delivery robot and an autonomous vehicle -- and obtained consistent results in all three of our experiments.

We provide theoretical results, including a new exploration rule, gated exploration, for enabling a CAS to safely explore new levels of autonomy in the presence of a human authority, and a proof that under normal convergence assumptions, the system is guaranteed to converge to its true competence and become level-optimal. Moreover, we showed that this result holds regardless of how the feedback profile is initialized.

Future work includes real-world experiments on an autonomous vehicle prototype with humans. We will also explore how we can utilize human feedback to identify potential features missing from the system's world model, as well as using our model to identify situations of low autonomy and over-reliance on a human authority.  Finally, we will examine ways to bootstrap competence learning to facilitate fast convergence when agent capabilities are modified. 

\begin{acks}
We thank the reviewers for their helpful comments. This work was supported in part by the National Science Foundation grant IIS-1724101 and in part by the Alliance Innovation Lab Silicon Valley.
\end{acks}

\makeatletter
\interlinepenalty=10000
\bibliographystyle{ACM-Reference-Format}
\bibliography{Baamas20}
\makeatother

\end{document}